\newcommand{\red}[1]{\textcolor{red}{{#1}}}
\newcommand{\ignore}[1]{}
\newtheorem{theorem}{Theorem}
\newtheorem{lemma}[theorem]{Lemma}
\newtheorem{corollary}[theorem]{Corollary}
\renewcommand{\Pr}{{\bf Pr}}
\newcommand{\E}{{\bf E}}
\begin{document}

\title{On Exact Learning Monotone DNF \\ from Membership
Queries}
\author{Hasan Abasi\ \ \ \ \ \ \ \  Nader H. Bshouty\\ Department of
Computer Science\\ Technion, Haifa, Israel \and Hanna Mazzawi\\ IBM Research - Haifa\\
Haifa, Israel.}
\maketitle

\begin{abstract}
In this paper, we study the problem of learning a monotone DNF with at most $s$ terms
of size (number of variables in each term)
at most $r$ ($s$ term $r$-MDNF) from membership queries. This problem is equivalent to the problem
of learning a general hypergraph using hyperedge-detecting queries, a problem motivated by applications arising in chemical reactions and genome sequencing.

We first present new lower bounds for this problem and then
present deterministic and randomized adaptive algorithms with query complexities
that are almost optimal. All the algorithms we present in this paper run in time linear in
the query complexity and the number of variables $n$.
In addition, all of the algorithms we present in this paper are asymptotically
tight for fixed $r$ and/or $s$.
\end{abstract}

\section{Introduction}
We consider the problem of learning a monotone DNF with at most $s$ terms,
where each monotone term contains at most $r$ variables
($s$ term $r$-MDNF) from membership queries~\cite{A87}.
This is equivalent to the problem of learning a general hypergraph using hyperedge-detecting queries, a problem
that is motivated by applications arising in chemical reaction and genome sequencing.

\subsection{Learning Hypergraph}
A hypergraph is $H=(V,E)$ where $V$ is the set of vertices and $E\subseteq 2^V$
is the set of edges. The dimension of the hypergraph $H$ is the cardinality
of the largest set in $E$. For a set $S\subseteq V$,
the {\it edge-detecting queries} $Q_H(S)$ is answered ``Yes'' or ``No'', indicating whether $S$ contains
all the vertices of at least one edge of $H$.
Our learning problem is equivalent to learning a hidden
hypergraph of dimension $r$ using edge-detecting queries.

This problem has many applications in chemical reactions and genome sequencing.
In chemical reactions, we are given a set of chemicals,
some of which react and some which do not. When multiple chemicals are combined in one
test tube, a reaction is detectable if and only if at least one set of the chemicals in the tube reacts.
The goal is to identify which sets react using as few experiments as possible. The time needed
to compute which experiments to do is a secondary consideration, though it is polynomial for
the algorithms we present \cite{AC08}.
See \cite{GK98,BAK01,ABK04,AA05,AC06,AC08} for more details and other applications.

\subsection{Previous Results}
In \cite{AC08}, Angluin and Chen presented an deterministic optimal adaptive learning algorithm for
learning $s$-term $2$-MDNF. They also gave a lower bound of $\Omega((2s/r)^{r/2}+rs\log n)$
for learning the class of $s$-term $r$-MDNF when $r<s$. In \cite{AC06}, Angluin and Chen
gave a randomized algorithm for $s$-term $r$-uniform MDNF (the size of each term is {\it exactly} $r$)
that asks $O(2^{4r}s\cdot poly(r,\log n))$ membership queries.
For $s$-term $r$-MDNF where $r\le s$, they gave a randomized learning algorithm
that asks $O(2^{r+r^2/2}s^{1+r/2}\cdot poly(\log n))$ membership queries.

Literature has also addressed learning some subclasses of $s$-term 2-MDNF. Those classes have specific applications to genome sequencing.
See \cite{GK98,BAK01,ABK04,AA05,AC06,AC08}. In this paper we are interested
in learning the class of all $s$-term $r$-MDNF formulas for any $r$ and $s$.

\subsection{Our Results}
In this paper, we distinguish between two cases: $s\ge r$ and $s<r$.

For $s<r$, we first prove the lower bound $O((r/s)^{s-1}+rs\log n)$.
We then give three algorithms. Algorithm I is a deterministic algorithm that
asks $O(r^{s-1}+rs\log n)$ membership queries. Algorithm II is a deterministic algorithm that
asks $O(s\cdot N((s-1;r);sr)+rs\log n)$ membership queries where $N((s-1;r);sr)$
is the size of $(sr,(s-1,r))$-cover free family (see Subsection \ref{CVF} for
the definition of cover free) that can be constructed in time linear in its size.
An $(sr,(s-1,r))$-cover free family of size $(r/s)^{s-1+o(1)}$ is known to exist.
For some $r$ and $s$ (for example $r=o(s\log s\log\log s)$),
such a bound can be achieved in linear time and
therefore for those cases, algorithm II is almost optimal. Algorithm III
is a randomized algorithm that asks
$$O\left({s+r\choose s}\sqrt{sr}\log (sr)+rs\log n\right)
=O\left(\left(\frac{r}{s}\right)^{s-1+o(1)}+rs\log n\right)$$ membership queries.
This algorithm is almost optimal.

For the case $s\ge r$, Angluin and Chen, \cite{AC08}, gave
the lower bound $\Omega((2s/r)^{r/2}+rs\log n)$. We give two algorithms
that are almost tight. The first algorithm, Algorithm IV, is a deterministic algorithm
that asks $(crs)^{r/2+1.5}+rs\log n$ membership queries for some constant $c$.
The second algorithm, Algorithm V, is a randomized algorithm that asks
$(c's)^{r/2+0.75}+rs\log n$ membership queries for some constant $c'$.

All the algorithms we present in this paper run in time linear in
the query complexity and $n$.
Additionally, all the algorithms we describe in this paper are asymptotically
tight for fixed $r$ and $s$.

The following table summarizes our results. We have removed the term
$rs\log n$ from all the bounds to be able to fit this table in this page.
Det. and Rand. stands for deterministic algorithm and randomized algorithm, respectively.
\begin{center}
\begin{tabular}{|l|c|l|l|c|}
& Lower Bound& &  Rand./& Upper Bound\\
$r,s$ &$rs\log n+$ & Algorithm& Det.& $rs\log n+$\\
\hline
\hline

$r>s$ & $\left(\frac{r}{s}\right)^{s-1}$ & Alg. I & Det. &$r^{s-1}$\\
\cline{3-5}
&& Alg. II & Det. &$s\cdot N((s-1;r);sr)$\\
\cline{3-5}
&& Alg. III & Rand. & $(\log r)\sqrt{s}e^s\left(\frac{r}{s}+1\right)^s$\\
\hline
$r\le s$ & $\left(\frac{2s}{r}\right)^{r/2}$ & Alg. IV. & Det. &$(3e)^{r} (rs)^{r/2+1.5}$\\
\cline{3-5}
 &  & Alg. IV. & Rand. &$\sqrt{r}(3e)^{r}(\log s) s^{r/2+1}$\\
\hline
\end{tabular}
\end{center}

\section{Definitions and Notations}
For a vector $w$, we denote by $w_i$ the $i$th entry of $w$. For a positive integer~$j$,
we denote by $[j]$ the set $\{1,2,\ldots,j\}$.

Let $f(x_1,x_2,\ldots,x_n)$ be a Boolean
function from $\{0,1\}^n$ to $\{0,1\}$. For an assignment $a\in\{0,1\}^n$
we say that $f$ is $\xi$ in $a$ (or $a$ is $\xi$ in $f$)
if $f(a)=\xi$. We say that $a$ is zero in $x_i$ if $a_i=0$.
For a set of variables $S$, we say that $a$ is zero in
$S$ if for every $x_i\in S$, $a$ is zero in $x_i$. Denote $X_n=\{x_1,\ldots,x_n\}$.

For a Boolean function $f(x_1,\ldots,x_n)$, $1\le i_1<i_2<\cdots<i_k\le n$
and $\sigma_1,\ldots,\sigma_k\in\{0,1\}$ we denote by $$f|_{x_{i_1}=\sigma_1, x_{i_2}=
\sigma_2,\cdots, x_{i_k}=\sigma_k}$$ the function $f$ when fixing the variables
$x_{i_j}$ to~$\sigma_j$ for all $j\in[k]$. We denote by $a|_{x_{i_1}=\sigma_1, x_{i_2}=
\sigma_2,\cdots, x_{i_k}=\sigma_k}$ the assignment $a$ where each $a_{i_j}$ is replaced
by $\sigma_j$ for all $j\in [k]$. Note that
$$f|_{x_{i_1}=\sigma_1, x_{i_2}=
\sigma_2,\cdots, x_{i_k}=\sigma_k}(a)=f(a|_{x_{i_1}=\sigma_1, x_{i_2}=
\sigma_2,\cdots, x_{i_k}=\sigma_k}).$$
When $\sigma_1=\cdots=\sigma_k=\xi$ and $S=\{x_{i_1},\ldots,x_{i_k}\}$, we denote $$f|_{x_{i_1}=\sigma_1, x_{i_2}=
\sigma_2,\cdots, x_{i_k}=\sigma_k}$$ by
$f|_{S\gets \xi}$. In the same way, we define $a|_{S\gets \xi}$.
We denote by $1^n=(1,1,\ldots,1)\in\{0,1\}^n$.

For two assignments $a,b\in \{0,1\}^n$, we write $a\le b$ if
for every $i$, $a_i\le b_i$. A Boolean function $f:\{0,1\}^n\to\{0,1\}$ is {\it monotone} if
for every two assignments $a,b\in\{0,1\}^n$, if $a\le b$ then $f(a)\le f(b)$.
Recall that every monotone Boolean function $f$ has a unique representation
as a reduced monotone DNF. That is, $f = M_1\vee M_2 \vee \cdots \vee M_s$ where
each {\it monomial} $M_i$ is an ANDs of input variables, and for every
monomial $M_i$ there is a unique assignment $a^{(i)}$ such that $f(a^{(i)})=1$
and for every $j\in [n]$ where $a^{(i)}_j=1$ we have $f(a^{(i)}|_{x_j= 0})=0$. We call
such assignment a {\it minterm} of the function $f$. Notice that
every monotone DNF can be uniquely determined by its minterms.

For a monotone DNF, $f(x_1,x_2,\ldots,x_n)
= M_1\vee M_2 \vee \cdots \vee M_s$, and a variable~$x_i$, we say that $x_i$ is
$t$-\textit{frequent} if it appears in more than or equal to $t$ terms. A monotone DNF $f$ is
called {\it read $k$ monotone} DNF, if none of its variables is $k+1$-frequent.

\subsection{Learning Model}
Consider a {\it teacher} (or a black box) that has a {\it target function}
$f:\{0,1\}^n\to \{0,1\}$ that is $s$-term $r$-MDNF. The teacher
can answer {\it membership queries}.
That is, when receiving $a\in\{0,1\}^n$ it returns $f(a)$.
A {\it learning algorithm} is an algorithm
that can ask the teacher membership queries.
The goal of the learning algorithm is to
{\it exactly learn} (exactly find) $f$ with minimum number of
membership queries and optimal time complexity.

In our algorithms, for a function $f$ we will denote by $MQ_f$ the oracle that
answers the membership queries. That is, for $a\in \{0,1\}^n$,
$MQ_f(a)=f(a)$.

\subsection{Cover-Free Families}\label{CVF}
\ignore{A $d$-{\it restriction problem} \cite{NSS95,AMS06,B12} is a
problem of the following form: Given $\Sigma=\{0,1\}$, a length
$n$ and a set $B\subseteq \Sigma^d$ of assignments. Find a set
$A\subseteq \Sigma^n$ of small size such that: For any $1\le i_1<
i_2<\cdots < i_d\le n$ and $b\in B$ there is $a\in A$ such that
$(a_{i_1},\ldots,a_{i_d})=b$.

When $B=\{0,1\}^d$ then $A$ is called $(n,d)$-{\it universal set}.
The lower bound for the size of $(n,d)$-universal set is
$|A|=\Omega(2^d\log n)$,~\cite{KS72}. The union bound gives the
upper bound $O(d2^d\log n)$ and one can show that a random uniform
set of $O(d2^d\log n)$
assignments in $\{0,1\}^n$ is $(n,d)$-universal set with high
probability. The best known polynomial time construction
(poly$(2^d,n)$) for this problem gives a universal set of size
$2^{d+O(\log^2d)}\log n$~\cite{NSS95}.}

The problem $(n,(s, r))$-{\it cover-free family} \cite{KS64}
is equivalent to the following problem: A $(n,(s,r))$-cover-free
family is a set $A\subseteq \{0,1\}^n$ such that for every $1\le
i_1< i_2<\cdots < i_d\le n$ where $d = s +r$ and every $J
\subseteq [d]$ of size $|J|=s$ there is $a\in A$ such that
$a_{i_k} = 0$ for all $k \in J$ and $a_{i_j} = 1$ for all $j
\not\in J$. Denote by $N((s; r); n)$ the minimum size of such set.
The lower bounds in
\cite{SWZ00} are
$$N((s;r);n)\ge \Omega\left(\frac{(s+r)}
{\log{s+r\choose s}}{s+r\choose s}\log n\right).$$
It is known that a set of random
\begin{eqnarray}\label{rand}
m=O\left(\sqrt{{\min(r,s)}}{s+r\choose s}\left ((s+r)\log n+\log\frac{1}{\delta}\right)\right)
\end{eqnarray}
vectors $a^{(i)}\in \{0,1\}^n$, where each $a^{(i)}_j$ is $1$ with probability $r/(s+r)$, is a $(n,(s,r))$-cover free
family with probability at least $1-\delta$.

In \cite{B12}, Bshouty gave
a deterministic construction of $(n,(s,r))$-CFF
of size
\begin{eqnarray}\label{Bsh}C&:=&\min((2e)^s r^{s+3},(2e)^r s^{r+3})\log n\nonumber \\
&=&{s+r\choose r} 2^{\min(s\log s,r\log r)(1+o(1))}\log n\end{eqnarray}  that can be
constructed in time $C\cdot n$. Fomin et. al. in \cite{FLS14} gave
a construction of size
\begin{eqnarray}\label{Fom}D:={s+r\choose r} 2^{O\left(\frac{r+s}{\log\log (r+s)}\right)}\log n\end{eqnarray}
that can be constructed in time $D\cdot n$.
The former bound, (\ref{Bsh}), is better than the latter when $s\ge r\log r\log\log r$ or $r\ge s\log s\log\log s$.
We also note that the former bound, (\ref{Bsh}),
is almost optimal, i.e., $${s+r\choose r}^{1+o(1)}\log n,$$ when
$r=s^{\omega(1)}$ or $r=s^{o(1)}$ and the latter bound, (\ref{Fom}),
is almost optimal when
$$o(s\log\log s\log\log\log s)=r=\omega\left(\frac{s}{\log\log s\log\log\log s}\right).$$

\section{Lower Bounds}
In this section, we prove some lower bounds.

\subsection{General Lower Bound}
In this section, we prove that the information theoretic
lower bound for learning a class $C$ from membership queries
is also a lower bound for any randomized
learning algorithm. We believe it is
a folklore result, but we could not find the proof in the literature.
We first state the following information-theoretic lower bound for
deterministic learning algorithm,

\begin{lemma} Let $C$ be any class of Boolean function. Then any
deterministic
learning algorithm for $C$ must ask at least $\log |C|$ membership queries.
\end{lemma}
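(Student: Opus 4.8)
The plan is to model the deterministic learning algorithm as a binary decision tree and then compare the number of leaves with $|C|$. Because each membership query returns a single bit, the algorithm's behavior is completely determined by the answers it has received so far: the assignment it queries at step $i$ is a fixed function of the previous $i-1$ answers, and the hypothesis it finally outputs is a fixed function of the entire answer sequence. I would formalize this by constructing a tree $T$ in which each internal node is labeled by the assignment $a\in\{0,1\}^n$ queried at that point, the two outgoing edges are labeled $0$ and $1$ (the two possible replies), and each leaf is labeled by the hypothesis output along that branch.

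Running the algorithm against a fixed target $f\in C$ then traces a unique root-to-leaf path: at each internal node labeled $a$, the algorithm descends along the edge labeled $f(a)$. The number of queries asked on target $f$ is exactly the number of internal nodes on this path, so if the algorithm asks at most $q$ queries for \emph{every} target in $C$, the tree has depth at most $q$ and hence at most $2^q$ leaves. The crux is the correctness argument. For the algorithm to learn $C$ it must output $f$ itself whenever the target is $f$. If two distinct functions $f,g\in C$ traced the same root-to-leaf path, they would reach the same leaf and thus receive the same output hypothesis; since at most one of $f,g$ can equal a single fixed hypothesis, the algorithm would err on the other, contradicting correctness. Therefore the map sending each $f\in C$ to its leaf is injective, whence $|C|$ is at most the number of leaves, which is at most $2^q$; taking logarithms yields $q\ge \log|C|$.

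The only place that needs genuine care is justifying that $T$ is well defined, i.e.\ that determinism forces the next query and the final output to depend solely on the bits observed so far and never on the hidden identity of $f$. Once this is granted the counting is immediate. I would also remark that the injectivity step does not require the two paths to diverge at any queried point: if $f\neq g$ reach a common leaf they must agree on every queried assignment along that path while differing elsewhere, which is precisely the situation in which the algorithm cannot tell them apart and must fail on one of them.
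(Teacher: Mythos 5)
Your proof is correct: the paper states this lemma without proof (treating it as the standard information-theoretic bound), and your decision-tree argument --- at most $2^q$ leaves for a depth-$q$ binary tree, injectivity of the map from targets to leaves forced by correctness, hence $2^q \ge |C|$ --- is precisely the canonical argument the paper is implicitly invoking. No gaps; the worst-case interpretation of the query count and the justification that determinism makes the tree well defined are both handled appropriately.
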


We now prove,
\begin{lemma} Let $C$ be any class of boolean function. Then any
Monte Carlo (and therefore, Las Vegas) randomized
learning algorithm that learns $C$ with probability
at least $3/4$ must ask at least $\log |C| -1$ membership queries.
\end{lemma}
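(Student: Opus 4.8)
The plan is to derive the randomized bound from the deterministic one (the previous lemma) via the easy direction of Yao's principle, i.e. an averaging argument. A randomized learning algorithm is a distribution over deterministic ones: writing $R$ for its internal coin tosses, it is a family $\{A_R\}$ of deterministic procedures. The Monte Carlo guarantee says that for every target $f\in C$, $\Pr_R[A_R\text{ is correct on }f]\ge 3/4$. I would average this inequality over $f$ drawn uniformly from $C$, exchange the order of the two (finite) averages, and conclude that there is a single fixing $R=r^*$ of the coins for which the deterministic algorithm $A_{r^*}$ succeeds on a uniformly random target with probability at least $3/4$.

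Next I would pass to the decision-tree view of $A_{r^*}$. Suppose the randomized algorithm asks at most $q$ membership queries in the worst case; fixing the coins cannot increase the number of queries, so $A_{r^*}$ also asks at most $q$ queries on every input. Since each membership query on a target $f$ returns the single bit $f(a)$, the algorithm $A_{r^*}$ is a binary decision tree of depth at most $q$, hence with at most $2^q$ leaves, each leaf carrying one output hypothesis. On target $f$ the algorithm follows the root-to-leaf path dictated by the answers $f(a)$ and outputs the label of the leaf it reaches.

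I would then bound how many targets $A_{r^*}$ can learn. Two distinct functions in $C$ that agree on every query encountered along a path reach the same leaf and receive the same output, so at most one of them can equal that leaf's label, i.e. at most one of them is learned correctly. Hence the set of functions $A_{r^*}$ learns correctly has size at most the number of leaves, $2^q$. Under the uniform distribution this gives $\Pr_f[A_{r^*}\text{ correct}]\le 2^q/|C|$, and comparing with the lower bound $3/4$ yields $2^q\ge (3/4)|C|$, that is $q\ge \log|C|-\log(4/3)$. Since $\log(4/3)<1$, this is stronger than the claimed bound $q\ge \log|C|-1$.

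The argument is short, and the only points needing care are in the averaging step: one must verify that the worst-case query bound $q$ transfers to the fixed deterministic algorithm $A_{r^*}$ (it does, as noted above) and that swapping $\E_R$ and $\E_f$ is legitimate (both are finite averages, so Fubini is immediate). After that, the decision-tree counting and the final arithmetic are routine. I expect the main conceptual obstacle to be stating the reduction cleanly — in particular, making the identification between ``asks at most $q$ queries'' and ``is a depth-$q$ binary decision tree'' precise, including the adaptivity of the queries, which the tree structure captures automatically.
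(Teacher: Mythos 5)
Your proof is correct and follows essentially the same route as the paper: average over the coin tosses to extract a single deterministic algorithm that succeeds on a constant fraction of a uniformly random target from $C$, then apply the information-theoretic (decision-tree) counting bound to the set of targets it learns correctly. The only differences are cosmetic --- you take the best fixing of the coins directly (success probability $\ge 3/4$, hence the marginally sharper bound $q\ge\log|C|-\log(4/3)$), whereas the paper applies Markov to get a seed succeeding on half of $C$ and then invokes the preceding deterministic lemma on that subclass, losing a full bit.
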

\begin{proof} Let ${\cal A}$ be a randomized algorithm that
for every $f\in C$ and
an oracle $MQ_f$ that answers membership queries for $f$, asks $m$ membership queries
and satisfies
$$\Pr_{s}[{\cal A}(MQ_f,s)=f]\ge \frac{3}{4}$$
where $s\in \{0,1\}^N$ is chosen randomly uniformly for some large $N$.
Consider the random variable $X_f(s)$ that is $1$ if ${\cal A}(MQ_f,s)=f$
and $0$, otherwise. Then for every $f$, $\E_s[X_f]\ge 3/4$. Therefore,
for random uniform $f\in C$
$$3/4\le \E_f[\E_s[X_f]] =\E_s[\E_f[X_f(s)]].$$ and by Markov Bound
for at least $1/2$ of the elements
$s\in \{0,1\}^N$ we have $\E_{f}[X_f(s)]\ge 1/2$. Let $S\subseteq \{0,1\}^N$
be the set of such elements. Then $|S|\ge 2^N/2$. Let $s_0\in S$ and $C_{s_0}\subseteq C$ the class
of functions $f$ where $X_{f}(s_0)=1$. Then $|C_{s_0}|\ge |C|/2$
and ${\cal A}(MQ_f,s_0)$ is a deterministic
algorithm that learns the class $C_{s_0}$. Using the information theoretic
lower bound for deterministic algorithm, we conclude that ${\cal A}(MQ_f,s_0)$
must ask at least
$$m\ge \log |C_{s_0}| = \log(1/2)+\log |C|$$ membership queries.
\end{proof}

Specifically, we have,
\begin{corollary} Any
Monte Carlo (and therefore Las Vegas) randomized
learning algorithm for the class of $s$-term $r$-MDNF must ask on
average at least $rs\log n$ membership queries.
\end{corollary}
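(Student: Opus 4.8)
The plan is to derive the corollary directly from Lemma~2 by lower bounding $\log|C|$, where $C$ is the class of all $s$-term $r$-MDNF over $X_n$. Lemma~2 guarantees that any Monte Carlo learner succeeding with probability at least $3/4$ asks at least $\log|C|-1$ membership queries, so it suffices to show that $\log|C|$ is at least $rs\log n$ up to lower-order terms. The entire content of the argument is therefore a counting bound on the number of distinct $s$-term $r$-MDNF formulas.

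First I would lower bound $|C|$ by exhibiting an explicit large subfamily. Assume $rs\le n$ and partition $X_n$ into $s$ pairwise disjoint blocks $B_1,\ldots,B_s$, each of size $\lfloor n/s\rfloor\ge r$. From each block $B_i$ select a monomial consisting of exactly $r$ of its variables; there are $\binom{\lfloor n/s\rfloor}{r}$ such selections per block, and the disjunction of the $s$ chosen monomials is an $s$-term $r$-MDNF. The key step is to check that distinct tuples of selections yield distinct functions: since the blocks are disjoint and every term has size exactly $r$, no term is contained in another, so each resulting DNF is reduced and its set of minterms is exactly the set of chosen terms. Because every monotone Boolean function has a unique reduced monotone DNF, different tuples give different functions. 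Hence
\[
|C|\ \ge\ \binom{\lfloor n/s\rfloor}{r}^{\!s}\ \ge\ \left(\frac{n}{sr}\right)^{\!rs},
\]
using $\binom{m}{r}\ge (m/r)^r$ together with $\lfloor n/s\rfloor\ge n/(2s)$ for $n\ge 2s$ (absorbing the constant into the lower-order term).

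Taking logarithms gives $\log|C|\ge rs\big(\log n-\log(sr)\big)$, so by Lemma~2 any such randomized learner asks at least $rs\log n-rs\log(sr)-1=rs\log n\,(1-o(1))$ queries in the regime $n\gg sr$ where the $\log n$ term dominates, which is exactly the additive term $rs\log n$ claimed. I do not expect a genuine obstacle here: the only subtlety is the bookkeeping that verifies the constructed subfamily lies in $C$ and consists of pairwise distinct functions, and once reducedness is noted this is routine.
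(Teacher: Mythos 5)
Your proposal is correct and follows exactly the route the paper intends: the corollary is stated as an immediate consequence of Lemma~2, with the (omitted) counting step $\log|C|=\Theta(rs\log n)$ that you supply via the disjoint-block construction. Your verification that the constructed formulas are reduced and pairwise distinct is the only nontrivial point, and it is handled correctly.
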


\subsection{Two Lower Bounds}
In this section, we give two lower bounds. The first is from \cite{AC06}
and the second follows using the same techniques used in \cite{BGHS96}.

In \cite{AC06}, Angluin and Chen proved,
\begin{theorem}  Let $r$ and $s$ be integers. Let $k$ and $\ell$ be two
integers such that
$$\ell\le r, \ s\ge {k\choose 2}\ell+1.$$
Any (Monte Carlo) randomized
learning algorithm for the class of $s$-term $r$-MDNF must ask at least
$$k^\ell-1$$ membership queries.

Specifically, when $s>>r$ we have the lower bound
$$\Omega\left(\left(\frac{2s}{r}\right)^{r/2}\right)$$
membership queries. Also, for any integer $\lambda$ where
$${\lambda\choose 2}r+1\le s < {\lambda+1\choose 2}r$$ we have the lower bound $\lambda^r-1$.
\end{theorem}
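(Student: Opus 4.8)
The plan is to prove the lower bound $k^\ell - 1$ by exhibiting a large family of $s$-term $r$-MDNF formulas that any learning algorithm cannot distinguish without asking at least $k^\ell - 1$ membership queries. The natural approach for membership-query lower bounds is an adversary argument: I would construct a collection of target functions that all agree (answer "No") on a large set of assignments, and show that only a "Yes" answer to one specific query can reveal which target is the hidden one. If there are $N$ such indistinguishable functions arranged so that each nontrivial query can eliminate at most one candidate, then any algorithm needs at least $N-1$ queries in the worst case. So the core of the argument is to build $k^\ell$ functions with this one-query-eliminates-one structure, giving the bound $k^\ell - 1$.

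\textbf{Construction.} The constraints $\ell \le r$ and $s \ge \binom{k}{2}\ell + 1$ strongly suggest the following encoding. I would partition (part of) the variable set into $\ell$ blocks, each block containing $k$ variables, say $B_1, \dots, B_\ell$ with $B_i = \{y_{i,1}, \dots, y_{i,k}\}$. A "secret" is a choice of one variable from each block, i.e.\ an index vector $(j_1, \dots, j_\ell) \in [k]^\ell$, giving exactly $k^\ell$ secrets. For each secret I would define a target monotone DNF whose minterms are designed so that the only positive assignment that isolates this secret is the one setting exactly the chosen variables $y_{1,j_1}, \dots, y_{\ell,j_\ell}$ (and these have size $\ell \le r$, which is why $\ell \le r$ is needed). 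The term-budget is consumed by "blocker" or "pairwise" terms of size $2$: for each pair of variables within a block I include a term, so I need roughly $\binom{k}{2}\ell$ such pairwise terms — which is precisely the quantity bounded by $s-1$ in the hypothesis $s \ge \binom{k}{2}\ell + 1$. These shared pairwise terms force every query that turns on two variables from the same block to answer "Yes" regardless of the secret, so such queries convey no information; a query can only be informative if it selects at most one variable per block, and then it matches at most one secret.

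\textbf{Adversary argument.} With this setup I would run the standard counting argument: the adversary answers "No" to every query except the single query (if ever asked) that exactly matches the hidden secret. Each query that picks at most one variable from each block can coincide with at most one of the $k^\ell$ secret assignments, so a "No" answer rules out at most one candidate secret. Since the algorithm must eventually determine the secret uniquely, and each query eliminates at most one of the $k^\ell$ possibilities, at least $k^\ell - 1$ queries are required. For the two "Specifically" consequences, I would substitute parameters: for $s \gg r$ take $\ell = r$ and choose $k$ as large as the budget $\binom{k}{2}r + 1 \le s$ permits, giving $k \approx \sqrt{2s/r}$ and hence $k^\ell = (2s/r)^{r/2}$; and for the explicit $\lambda$-statement I would take $\ell = r$ and $k = \lambda$, where the hypothesis $\binom{\lambda}{2} r + 1 \le s$ is exactly the stated range, yielding $\lambda^r - 1$.

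\textbf{The main obstacle} will be verifying that the pairwise "blocker" terms do not inadvertently create spurious minterms that let a single cleverly chosen query distinguish several secrets at once, thereby breaking the one-query-eliminates-one property; equivalently, I must confirm that every informative query really does use at most one variable per block and that the constructed formula for each secret is genuinely an $s$-term $r$-MDNF (the pairwise terms plus the secret term must total at most $s$ and no term may exceed size $r$). Getting the combinatorial bookkeeping of the term budget to match $\binom{k}{2}\ell + 1 \le s$ exactly, while ensuring the minterm of size $\ell$ is the unique isolating assignment, is the delicate part. Since the statement attributes this to \cite{AC06} via the same techniques as \cite{BGHS96}, I expect the published proof to follow exactly this adversary-plus-counting template.
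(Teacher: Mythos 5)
The paper does not actually prove this theorem---it is quoted from Angluin and Chen \cite{AC06}---but your construction ($\ell$ blocks of $k$ variables, all $\binom{k}{2}\ell$ within-block pairs as size-$2$ terms, plus one hidden transversal term of size $\ell\le r$, giving $k^\ell$ candidates of which each query eliminates at most one) is exactly the standard one, and it is the same technique the paper itself uses for its companion lower bound (Theorem 2). Your worry about spurious minterms is unfounded (the transversal term contains no within-block pair, so each candidate DNF is reduced, and any query turning on two variables of one block is answered ``Yes'' by every candidate, hence uninformative); the only detail left implicit is that your adversary argument as stated is deterministic, so to obtain the claimed Monte Carlo bound you still need the standard averaging step, exactly as in the paper's Lemma 2 or the ``average of $t^m$ guesses'' remark in its proof of Theorem 2.
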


We now prove the following lower bound,
\begin{theorem}  Let $r$ and $s$ be integers and $\ell$ and $t$ be two
integers such that
$$\ell-\left\lfloor\frac{\ell}{t}\right\rfloor\le r,\ \ \ \left\lfloor\frac{\ell}{t}\right\rfloor\le s-1.$$
Any
(Monte Carlo) randomized
learning algorithm for the class of $s$-term $r$-MDNF must ask at least $t^{\lfloor \ell/t\rfloor}$
membership queries.

Specifically, for $r>>s$ we have the lower bound
$$\left(\frac{r}{s}\right)^{s-1}.$$
and for any constant integer $\lambda$ and $\lambda s\le r<(\lambda+1)s$ we have the lower bound
$$(\lambda+1)^{s-1}.$$
\end{theorem}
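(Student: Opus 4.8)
The plan is to prove the lower bound by exhibiting a large family of $s$-term $r$-MDNF formulas that an adversary can hide among, then invoke the information-theoretic lower bound already established (the randomized lower bound of $\log|C|-1$ queries from the second lemma). The key idea is to construct a subclass $C$ of $s$-term $r$-MDNF whose members are indistinguishable unless the learner queries a specific ``witness'' assignment for each one, forcing at least $|C|-1$ queries in the worst case rather than merely $\log|C|$. So the structure I would use is a combinatorial design in which I partition $\lfloor\ell/t\rfloor$ coordinate-blocks, each of size $t$, and let each formula in $C$ correspond to a choice of one distinguished element from each block, giving $|C|=t^{\lfloor\ell/t\rfloor}$ candidate functions.

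Concretely, I would set $q=\lfloor\ell/t\rfloor$ and take $q$ disjoint groups of $t$ variables each. Within each group a formula ``selects'' one variable, and these selections are woven into a fixed structure of terms so that (i) each selection pattern yields a genuine $s$-term $r$-MDNF — here the constraints $q\le s-1$ and $\ell-q\le r$ are exactly what guarantee the term count stays at most $s$ and each term has size at most $r$ — and (ii) any two distinct selection patterns agree on almost every input, differing only on a single carefully chosen assignment that acts as a fingerprint. First I would write down the generic term structure (a common ``backbone'' of terms shared by all members of $C$, plus $q$ selection-dependent terms), then verify the two size constraints translate into $\ell-q\le r$ and $q\le s-1$. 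Next I would argue that to distinguish all $t^{q}$ functions the algorithm must essentially probe each group independently, so the adversary can answer ``No'' to queries until the learner is forced to narrow down each of the $q$ independent selections, yielding the $t^{q}=t^{\lfloor\ell/t\rfloor}$ bound.

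The cleanest way to close the argument is the adversary/indistinguishability route rather than a raw $\log|C|$ count: I would show that any query $a\in\{0,1\}^n$ can ``eliminate'' at most a controlled number of candidate functions, and that eliminating all but one requires $t^{\lfloor\ell/t\rfloor}-1$ queries. Equivalently, one sets up the functions so that a single membership query reveals information about at most one group's selection, making the groups behave like $q$ independent ``needle in a haystack of size $t$'' searches whose product of difficulties is $t^{q}$. The two specific corollaries then follow by choosing parameters: for $r\gg s$ take $t\approx r/s$ and $q=s-1$ so that $t^{q}=(r/s)^{s-1}$, and for $\lambda s\le r<(\lambda+1)s$ take $t=\lambda+1$ and $q=s-1$ to get $(\lambda+1)^{s-1}$, checking in each case that the divisibility floors and the constraints $\ell-q\le r$, $q\le s-1$ are met.

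The hard part will be constructing the subclass $C$ so that all three requirements hold simultaneously: every member is a legal $s$-term $r$-MDNF, the members are pairwise distinguishable, and — crucially — each membership query yields essentially one ``bit'' of group information so the bound is $t^{q}$ and not merely $q\log t$. Getting the term backbone right, so that flipping one group's selection changes the function's value on exactly one targeted assignment while leaving all other relevant assignments fixed, is the delicate combinatorial step; this is where the technique of \cite{BGHS96} (which the statement cites as the source of the method) should supply the template, and I would adapt its adversary argument to the monotone-DNF setting by identifying each group's fingerprint assignment explicitly and confirming that the shared backbone terms do not accidentally satisfy those fingerprints.
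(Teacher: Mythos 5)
Your overall plan --- hide a family of $t^{\lfloor\ell/t\rfloor}$ candidate formulas behind per-candidate fingerprint assignments and run an adversary argument rather than the bare $\log|C|$ count --- is the same route the paper takes, and your parameter choices for the two corollaries are right. But the step you defer as ``the delicate combinatorial step'' is the entire content of the proof, and the structure you sketch for it (a shared backbone plus $q$ selection-dependent terms, one per block) would not work. If each block's selection is encoded in its own term, the learner can probe the blocks independently, and the adversary only extracts on the order of $q\cdot t$ queries (a sum of difficulties), not the product $t^{q}$; moreover, $q$ backbone terms plus $q$ selection terms is $2q$ terms, which is not what the constraint $\lfloor\ell/t\rfloor\le s-1$ budgets for.

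The paper resolves this with a \emph{single} extra term that entangles all $q$ selections. Set $m=\lfloor\ell/t\rfloor$, let $M_j=x_{(j-1)t+1}\cdots x_{jt}$ for $j=1,\ldots,m$ be the backbone (so $m+1\le s$ terms in total), and for a selection tuple $(k_1,\ldots,k_m)\in[t]^m$ let the one additional term be $M_{1,k_1}M_{2,k_2}\cdots M_{m,k_m}$, where $M_{i,k}$ is $M_i$ with $x_{(i-1)t+k}$ deleted; its size is $m(t-1)\le \ell-\lfloor\ell/t\rfloor\le r$. Now $f=M_1\vee\cdots\vee M_m$ and $g=f\vee M_{1,k_1}\cdots M_{m,k_m}$ differ only on assignments whose zeros among the first $mt$ coordinates are exactly the $m$ selected positions (each block must contain a zero to keep $f$ at $0$, and that zero must be the deleted variable to keep the extra term at $1$), so each query consistent with $f$ eliminates at most one tuple and $t^{m}$ queries are forced. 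You need to either reproduce this single-entangled-term construction or supply another one with the same ``each query kills at most one candidate'' property; without it, the bound $t^{\lfloor\ell/t\rfloor}$ does not follow from your outline.
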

\begin{proof}
Let $m= \lfloor\ell/t\rfloor$. Consider the monotone terms $M_j=x_{(j-1)t+1}\cdots x_{jt}$
for $j=1,2,\ldots, m$. Define $M_{i,k}$ where
$i=1,\ldots,m$ and $k=1,\ldots,t$ the monotone term
$M_i$ without the variable $x_{(i-1)t+k}$.
Let $M_{k_1,k_2,\ldots,k_m}=M_{1,k_1}M_{2,k_2}\cdots M_{m,k_m}$.
The only way we can distinguish between the two hypothesis $f=M_1\vee M_2\vee\cdots\vee M_m$
and $g=M_1\vee M_2\vee\cdots\vee M_m\vee M_{k_1,k_2,\ldots,k_m}$ is by guessing
an assignment that is $1$ in all its first $mt$ entries except for the
entire $k_1,t+k_2,2t+k_3,\ldots,(m-1)t+k_m$. That is, by guessing $k_1,k_2,\ldots,k_m$.
This takes an average of $t^m$ guesses.  Since both $f$ and $g$ are $s$-term $r$-MDNF, the result follows.

For $r>>s$, we choose $\ell=r$ and $t$ such that $\lfloor \ell/t\rfloor=s-1$.
Since $s-1=\lfloor \ell/t\rfloor\ge \ell/t-1$, we have $t\ge r/s$ and the result follows.

For $\lambda s\le r<(\lambda+1)s$, proving the lower bound for $r=\lambda s$ is sufficient.
Take $t=\lambda+1$ and $\ell=(\lambda+1)s-1$.
\end{proof}

\section{Optimal Algorithms for Monotone DNF}
In this section, we present the algorithms (Algorithm I-V) that learn
the class of $s$-term $r$-MDNF. We first give a simple algorithm that
learns one term. We then give three algorithms (Algorithm I-III) for the case $r>s$
and two algorithms (Algorithm IV-V) for the case $s\ge r$.

\subsection{Learning One Monotone Term}\label{LOMT}
In this section, we prove the following result.
\begin{lemma}
Let $f(x)=M_1\vee M_2\vee \cdots \vee M_{s}$ be the target function where each
$M_i$ is a monotone term of size at most $r$. Suppose $g(x)=M_1\vee M_2\vee \cdots \vee M_{s'}$
and $h(x)=M_{s'+1}\vee M_{s'+2}\vee \cdots \vee M_s$. If $a$ is an assignment such that
$g(a)=0$ and $h(a)=1$, then a monotone term in $h(x)$ can be found with $$O\left(r\log\frac{n}{r}\right)$$ membership queries.
\end{lemma}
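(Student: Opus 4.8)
The plan is to reduce the task to finding a single minterm of the monotone function $h$ and then to locate that minterm by an adaptive group-testing (binary-splitting) search. First I would exploit the hypothesis on $a$: for every assignment $b\le a$ we have $g(b)\le g(a)=0$ by monotonicity, so $MQ_f(b)=f(b)=g(b)\vee h(b)=h(b)$. Hence, as long as we only query assignments that are $\le a$ (i.e. that turn some of the $1$'s of $a$ into $0$'s and never introduce new $1$'s), the oracle $MQ_f$ behaves exactly like an oracle for $h$, and we start from $h(a)=1$. Restricting attention to the support $I=\{i:a_i=1\}$, our goal becomes: find a minimal $T\subseteq I$ with $h(a_T)=1$, where $a_T$ denotes the assignment that is $1$ exactly on $T$ (so $a_T\le a$). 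Since the minimal positive sets of the reduced monotone DNF $h$ are precisely its terms, such a minimal $T$ equals one of the terms $M_{s'+1},\dots,M_s$, and therefore $|T|\le r$.

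Next I would find this minterm by peeling $I$ down to $T$, removing variables in blocks rather than one at a time. Maintain a ``live'' pool $C\subseteq I$ with the invariant $h(a_C)=1$, starting from $C=I$. To test whether a block $B\subseteq C$ is entirely disposable, query $MQ_f(a_{C\setminus B})$: if the answer is $1$ then $C\setminus B$ already contains a full term, so I discard all of $B$ at once and set $C\leftarrow C\setminus B$; if the answer is $0$ then $B$ must contain a variable essential to $C$, and I split $B$ into two halves and recurse on each (keeping the rest of $C$ present). A singleton block that tests $0$ marks a variable of the term, and the procedure stops once every remaining variable has been confirmed essential, at which point $C=T$. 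The two facts that make this correct are monotonicity-driven: removing a disposable block keeps the pool positive, and once a variable $v$ is essential with respect to $C$ it remains essential for every positive subset $C'\subseteq C$, since $h(a_{C'\setminus\{v\}})\le h(a_{C\setminus\{v\}})=0$; thus no earlier decision is ever invalidated.

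This is exactly adaptive group testing for the at most $r$ ``defectives'' (the term variables) among the $|I|\le n$ candidates, where a test on $B$ asks whether $B$ contains a defective and is realized by the single query above. The generalized binary-splitting analysis then bounds the number of queries by $O\!\left(r\log(n/r)\right)$, which is also the information-theoretic count $\log\binom{n}{\le r}$ of possible terms. The step I expect to be the real obstacle is precisely this query bound: the naive variants --- deleting variables one by one, or binary-searching the whole pool for a single essential variable at a time --- cost $\Theta(n)$ and $\Theta(r\log n)$ respectively, and shaving the bound down to $r\log(n/r)$ forces the block-halving discipline together with a careful amortized charging of the ``split'' queries against the at most $r$ essential variables they isolate. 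Verifying that this charging yields the $\log(n/r)$ factor rather than $\log n$ is the delicate part of the argument.
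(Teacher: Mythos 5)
Your reduction is exactly the paper's: since $g$ is monotone and $g(a)=0$, every query $b\le a$ satisfies $f(b)=h(b)$, so it suffices to peel $a$ down to a minterm of $f$, which is then a term of $h$. Where you diverge is in the peeling procedure. The paper uses a round-based scheme: in each round it partitions the current support of size $w>2r$ into $2r$ disjoint blocks of size about $w/(2r)$ and tests each block once for disposability; since the surviving assignment contains a minterm with at most $r$ variables and the blocks are disjoint, at most $r$ blocks can be retained, so each round of $2r$ queries at least halves the support, giving $2r\log(n/(2r))+2r$ queries with essentially no further analysis. You instead use recursive binary splitting of failed blocks (generalized group testing). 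That is also correct and gives the same bound, but the step you yourself flag as delicate --- charging the split queries to get $\log(n/r)$ rather than $\log n$ --- genuinely requires an argument you only sketch: one must fix the \emph{final} pool $T$ (which is a minterm, so $|T|\le r$ and $T$ is contained in every intermediate pool), observe that every failed block intersects $T$, and then count the nodes of the recursion tree meeting a set of $r$ leaves ($O(r)$ nodes on the top $\log r$ levels, at most $r$ per level on the remaining $\log(n/r)$ levels). This is needed in particular because the ``defectives'' are not fixed in advance: which term you converge to depends on the removal order. So your route works and is a legitimate alternative, but the paper's fixed-partition rounds buy a one-line query count where your scheme requires the tree-counting lemma to be spelled out.
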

\begin{proof} First notice that since $g$ is monotone,
for any $b\le a$ we have $g(b)=0$.
Our algorithm finds a minterm $b\le a$ of $f$ and therefore $b$ is a minterm
of $h$.

First, if the number of ones in $a$ is $2r$, then we can find a minterm
by flipping each bit in $a$
that does change the value of $f$ and get a minterm. This takes at most $2r$ membership queries.

If the number of ones in $a$ is $w>2r$, then we divide the entries of $a$ that
are equal to $1$ into $2r$ disjoint sets $S_1,S_2,\ldots,S_{2r}$ where for every $i$,
the size of $S_i$ is either
$\lfloor w/(2r)\rfloor$ or $\lceil w/(2r)\rceil$. Now for $i=1,2,\ldots,2r$,
we flip all the entries of $S_i$ in $a$ to zero and ask a membership query. If
the function is one, we keep those entries $0$. Otherwise we set them back to $1$
and proceed to $i+1$. At the end of this procedure, at most $r$ sets are not flipped.
Therefore, at least half of the bits in $a$ are flipped to zero using $2r$ membership queries.
Therefore, the number of membership queries we need to get a minterm is
$2r\log(n/2r)+2r.$
\end{proof}

We will call the above procedure {\bf Find-Term}.

\subsection{The case $r> s$}

In this section, we present three algorithms, two deterministic and one randomized.
We start with the deterministic algorithm.

\subsubsection{Deterministic Algorithm}
Consider the class $s$-term $r$-MDNF.
Let $f$ be the target function. Given $s-\ell$ monotone terms
$M_1\vee M_2\vee \cdots \vee M_{s-\ell}$ that are known to the learning
algorithm to be in $f$.
The learning algorithm goal is to find a new monotone term.
In order to find a new term we need to find an
assignment $a$ that is zero in $M_1\vee M_2\vee \cdots \vee
M_{s-\ell}$ and $1$ in the function $f$. Then by the procedure
{\bf Find-Term} in Subsection~\ref{LOMT}, we get a new term in $O(r\log n)$ additional membership queries.

To find such an assignment, we present three algorithms:

{\bf Algorithm I}:  (Exhaustive Search) choose a variable from each $M_i$ and set it to zero
and set all the other variables to $1$. The set of all such assignments
is denoted by $A$. If $f$ is $1$ in some $a\in A$, then find
a new term using {\bf Find-Term}.

We now show,
\begin{lemma} If $f\not\equiv h$, then {\bf Algorithm I} finds a new term in $r^{s-\ell}+O(r\log n)$
membership queries.
\end{lemma}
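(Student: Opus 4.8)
We are given $h = M_1 \vee M_2 \vee \cdots \vee M_{s-\ell}$, a known part of the target $f$, and we assume $f \not\equiv h$, i.e.\ $f$ has at least one term not among $M_1, \ldots, M_{s-\ell}$. The algorithm builds the set $A$ consisting of every assignment obtained by selecting one variable from each known term $M_i$, setting that chosen variable to $0$, and setting all remaining variables to $1$. We must show that some $a \in A$ satisfies $f(a) = 1$ and $h(a) = 0$, and then bound the total query cost.

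**Plan.** The heart of the argument is a covering observation. First I would note that each $a \in A$ is, by construction, zero in at least one variable of each $M_i$, and hence $M_i(a) = 0$ for all $i \le s-\ell$; thus $h(a) = 0$ for every $a \in A$, so no $a \in A$ can give a false positive. The nontrivial direction is to produce an $a \in A$ with $f(a) = 1$. Since $f \not\equiv h$, there is a term $M$ of $f$ with $M \notin \{M_1, \ldots, M_{s-\ell}\}$. The key claim is that $M$ is not covered by the known terms in the following sense: for each known term $M_i$, because $M$ is a genuinely new (reduced) term, $M_i$ cannot be a subset of $M$'s variable set — otherwise $M$ would be redundant or the DNF non-reduced. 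I would make precise that for each $i$ there exists a variable in $M_i$ that does \emph{not} appear in $M$. Choosing, for each $i$, such a variable to be the one we zero out produces an assignment $a \in A$ on which every variable of $M$ is still set to $1$ (since none of the zeroed variables lies in $M$), so $M(a) = 1$ and therefore $f(a) = 1$.

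**The main obstacle.** The step that needs genuine care is justifying that for each known term $M_i$ there is a variable of $M_i$ lying outside $M$. This is where the reduced/minterm structure must be invoked: if some $M_i$ had all its variables inside $M$, then the variable set of $M_i$ would be a subset of that of $M$, which for monotone terms means $M \le M_i$ as monomials, contradicting that $f$'s DNF representation is reduced (a minterm of $f$ cannot strictly dominate another term). I would spell this out by appealing to the uniqueness of the reduced monotone DNF and the minterm characterization given in the Definitions section. Once this selection is possible for every $i$ simultaneously, the desired $a$ exists in $A$.

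**Query count.** Finally I would tally the cost. The set $A$ has exactly $\prod_{i=1}^{s-\ell} |M_i| \le r^{s-\ell}$ assignments, since each $M_i$ has at most $r$ variables and we choose one variable per term; asking one membership query per assignment costs at most $r^{s-\ell}$ queries. Once an $a$ with $f(a)=1$ and $h(a)=0$ is found, invoking \textbf{Find-Term} from Subsection~\ref{LOMT} extracts a new term in an additional $O(r\log n)$ queries. Summing gives the stated bound $r^{s-\ell} + O(r\log n)$.
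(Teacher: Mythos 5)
Your proof is correct, and it reaches the key existence statement by a different route than the paper. The paper's argument is extensional: since $h\Rightarrow f$ and $f\not\equiv h$, there is a witness assignment $b$ with $f(b)=1$ and $h(b)=0$; picking, for each $M_i$, a variable that is zero in $b$ yields an $a\in A$ with $a\ge b$, and monotonicity gives $f(a)\ge f(b)=1$. You instead argue structurally from the reduced DNF: a new term $M$ of $f$ exists, and for each known $M_i$ some variable of $M_i$ lies outside $\mathrm{vars}(M)$ (else $M\Rightarrow M_i$, contradicting reducedness), so zeroing those variables keeps $M$ satisfied. Both arguments are sound and give the same bound. The paper's version is slightly more economical in its hypotheses — it needs only monotonicity and $f\not\equiv h$, not the fact that no term of the reduced representation implies another — while yours makes explicit which new term the surviving assignment certifies, which is closer in spirit to the covering arguments used later for Algorithm II. Your query count ($\prod_i|M_i|\le r^{s-\ell}$ queries for $A$, plus $O(r\log n)$ for \textbf{Find-Term}) matches the paper's.
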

\begin{proof}
Since the number of variables in each term in $h:=M_1\vee M_2\vee \cdots \vee
M_{s-\ell}$ is at most $r$ the number of assignments in $A$ is at most $r^{s-\ell}$.
Since we choose one variable from each term in $h$ and set it to zero, all the
assignments in $A$ are zero in $h$. We now show that one of the assignments
in $A$ must be $1$ in $f$, and therefore a new term can be found.

Let $b$ be an assignment that is $1$ in $f$ and zero in $h$. Such assignment
exists because otherwise $f\Rightarrow h$ and since $h\Rightarrow f$ we get $f\equiv h$.
Since $h(b)=0$ there is at least one variable $x_{j_i}$ in each $M_i$ that is zero in $b$.
Then the assignment $a:=1^n|_{x_{j_1}= 0,\ldots,x_{j_{s-\ell}}= 0}$ is in $A$
and $h(a)=0$. Since $a\ge b$ we also have $f(a)=1$.
\end{proof}

The number of queries in this algorithm is
$$\sum_{\ell=1}^{s} O\left(r^{s-\ell}+r\log n\right)=O(r^{s-1}+rs\log n).$$

We now present the second algorithm. Recall that $X_n=\{x_1,\ldots,x_n\}$.

\begin{figure}[h!]
  \begin{center}
  \fbox{\fbox{\begin{minipage}{28em}
  \begin{tabbing}
  xxxx\=xxxx\=xxxx\=xxxx\= \kill
  {\bf Algorithm II}\\
  1) Let $V$ be the set of variables that appear in $M_1\vee M_2\vee \cdots
\vee M_{s-\ell}$.\\
  2) Take a $(|V|,(s-\ell,r))$-CFF $A$ over the variables
$V$.\\
  3) For each $a\in A$\\
  \> 3.1) Define an assignment $a'$
that is $a_i$ in $x_i$ for every $x_i\in V$ \\ \>\> and $1$ in $x_i$ for every
$x_i\in X_n \backslash V$.\\
  \> 3.2) If $M_1\vee M_2\vee \cdots \vee
M_{s-\ell}$ is $0$ in $a'$ and $f$ is one in $a'$ \\
  \>\> then find a new term using {\bf Find-Term}
  \end{tabbing}
  \end{minipage}}}
  \end{center}
	\caption{Algorithm II for the case $r> s$.}
	\label{AlgII}
	\end{figure}

We now show,
\begin{lemma} If $f\not\equiv h$, then {\bf Algorithm II} finds a new term in $N((s-\ell,r),(s-\ell)r)+O(r\log n)$
membership queries.
\end{lemma}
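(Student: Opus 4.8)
The plan is to prove two things: that the algorithm asks the claimed number of queries, and that whenever $f\not\equiv h$ (writing $h:=M_1\vee M_2\vee\cdots\vee M_{s-\ell}$) the family $A$ is guaranteed to contain an assignment $a$ whose extension $a'$ satisfies $h(a')=0$ and $f(a')=1$, so that a new term can be extracted.

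The query count is the easy part. Since $h$ is already known to the learner, evaluating $h(a')$ in Step 3.2 costs nothing, so each $a\in A$ requires only a single membership query to test $f(a')$, for a total of $|A|$ queries. Taking $A$ to be a minimum-size $(|V|,(s-\ell,r))$-CFF and using $|V|\le (s-\ell)r$ (each of the $s-\ell$ known terms has at most $r$ variables) together with monotonicity of $N$ in the universe size gives $|A|\le N((s-\ell;r);(s-\ell)r)$. Once a good $a'$ is located, the {\bf Find-Term} procedure of Subsection~\ref{LOMT} produces a new term in $O(r\log n)$ additional queries, which yields the stated bound.

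The heart of the argument is the existence of a good assignment in $A$. Because $h\Rightarrow f$ always, the hypothesis $f\not\equiv h$ forces the existence of a term $M_p$ with $p>s-\ell$; I will build the witness assignment around $M_p$. The key structural fact is that $f$ is a \emph{reduced} monotone DNF, so no term is contained in another. In particular, for each $i\le s-\ell$ we have $M_i\not\subseteq M_p$, so we may choose a variable $x_{j_i}\in M_i\setminus M_p$. Set $J:=\{x_{j_1},\ldots,x_{j_{s-\ell}}\}$ and $R:=M_p\cap V$. Then $J$ and $R$ are disjoint subsets of $V$ (disjoint precisely because $x_{j_i}\notin M_p$) with $|J|\le s-\ell$ and $|R|\le|M_p|\le r$: the set $J$ is exactly what must be zeroed out to kill every term of $h$, while $R$ is the part of $M_p$ lying in $V$ that must stay at $1$ (the remaining variables of $M_p$ are in $X_n\setminus V$ and are already $1$ in $a'$). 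Applying the defining property of the $(|V|,(s-\ell,r))$-CFF to $J$ and $R$ yields an $a\in A$ that is $0$ on $J$ and $1$ on $R$; for the corresponding $a'$ each $M_i$ with $i\le s-\ell$ is killed by $x_{j_i}=0$, so $h(a')=0$, while all variables of $M_p$ are $1$, so $f(a')=1$, as desired.

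The step I expect to require the most care is matching $J$ and $R$ to the CFF parameters: the cover-free property is stated for a zero-set of size exactly $s-\ell$ and a disjoint one-set of size exactly $r$, whereas $|J|$ and $|R|$ may be strictly smaller. One pads $J$ and $R$ up to the exact sizes by adjoining indifferent coordinates of $V$, which is legitimate provided $|V|\ge (s-\ell)+r$; the small-universe corner cases (most notably $s-\ell=1$, where one may instead argue directly with the $r$ single-zero patterns on the variables of $M_1$) should be checked separately. The other point to verify carefully is the disjointness of $R$ and $J$, which is exactly the place where reducedness of the DNF is used.
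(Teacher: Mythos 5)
Your proof is correct and follows essentially the same route as the paper: zero out one chosen variable per known term, keep the variables of a new term at $1$, and invoke the $(|V|,(s-\ell,r))$-cover-free property to find such an assignment in $A$. The only difference is in how the variables $x_{j_i}$ and their disjointness from the new term are obtained: the paper extracts both from a witness assignment $b$ with $f(b)=1$ and $h(b)=0$ (so the zeroed variables and the satisfied new term's variables are automatically disjoint), whereas you invoke reducedness of the DNF to pick $x_{j_i}\in M_i\setminus M_p$ directly; both work, and your explicit attention to padding $J$ and $R$ to the exact CFF parameter sizes addresses a detail the paper leaves implicit.
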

\begin{proof} Let $h:=M_1\vee M_2\vee \cdots \vee
M_{s-\ell}$. Let $b$ be an assignment that is $1$ in $f$ and zero in $h$.
Since $h(b)=0$, there is at least one variable $x_{j_i}$ in each $M_i$ that is zero in $b$.
Consider the set $U=\{x_{j_i}|i=1,\ldots,s-\ell\}$.
Since $f(b)=1$ there is a new term $M$ in $f$ that is one in $b$. That is, all of its variables are
one in $b$. Let $W$ be the set of all variables in $M$. Since $A$ is $(|V|,(s-\ell,r))$-CFF
and since $|U\cup (W\cap V)|\le s-\ell+r$ there is an assignment $a\in A$ that
is $0$ in each variable in $U$ and is one in each variable in $W\cap V$.
Since $a'$ is also $0$, in each variable in $U$ we have $h(a')=0$. Since $a'$ is one in
each variable in $W\cap V$
and one in each variable $W\backslash V$, we have $M(a')=1$ and therefore $f(a')=1$. This completes the proof.
\end{proof}

The number of queries in Algorithm II
is
$$\sum_{\ell=1}^{s-1}N((s-\ell,r),(s-\ell)r)+r\log n=O( sN((s-1,r),sr)+rs\log n).$$

\subsubsection{Randomized Algorithm}
Our third algorithm, Algorithm III, is a randomized algorithm.
It is basically Algorithm II where an $(rs,(s-1,r))$-CFF $A$ is
randomly constructed, as in (\ref{rand}). Notice that
an $(rs,(s-1,r))$-CFF is also an $(|V|,(s-\ell,r))$-CFF, so it can be used
in every round of the algorithm.
The algorithm fails if
there is a new term that has not been found and this happens if and
only if $A$ is not $(rs,(s-1,r))$-CFF. So the failure probability is $\delta$.
By (\ref{rand}), this gives a Monte Carlo randomized algorithm with query complexity
$$O\left(\sqrt{s}{s+r\choose s}\left (r\log r+\log\frac{1}{\delta}\right)+rs\log n\right).$$

\subsection{The case $r<s$}
In this section, we present two algorithms. Algorithm IV
is deterministic and Algorithm V is randomized. We start with the deterministic algorithm.
\subsubsection{Deterministic Algorithm}

In this section, we present Algorithm IV, used when $r<s$.
For this case, we prove the following,
\begin{theorem}\label{s-dominant}
There is a deterministic learning algorithm for
the class of $s$-term $r$-MDNF that asks
$$
O\left((3e)^{r} (rs)^{r/2+1.5} + rs\log n\right),
$$
membership queries.
\end{theorem}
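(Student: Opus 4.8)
The plan is to follow the same "produce a separating assignment, then peel off a term" framework used for Algorithms I and II, but to search for that separating assignment by a meet-in-the-middle strategy that halves the exponent. Recall that, given the already-discovered terms $h=M_1\vee\cdots\vee M_{s-\ell}$, it suffices to find a single assignment $a$ with $h(a)=0$ and $f(a)=1$: since $h$ is monotone, \textbf{Find-Term} run from such an $a$ returns a minterm $b\le a$ of $f$ with $h(b)=0$, i.e. a genuinely new term, in $O(r\log n)$ queries. Thus the whole task reduces to cheaply exhibiting, as long as $f\not\equiv h$, an assignment that is zero on every known term yet keeps some undiscovered term entirely on. I would first spend $O(rs\log n)$ queries to restrict attention to the at most $rs$ relevant variables (this is where the additive $rs\log n$ term comes from), so that all subsequent enumeration ranges over a universe $V$ of size $m\le rs$ rather than over all $n$ variables.

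The core idea for beating the $\binom{s+r}{r}\approx s^{r}$ cost of the cover-free construction of Algorithm II is to specify a candidate new term one \emph{half} at a time. I would fix a perfect-hash / balanced-splitting family $\mathcal F$ of colorings of $V$ by roughly $r$ colors, of size $(3e)^{r}\cdot\mathrm{poly}(r)\log m$, with the property that every set of at most $r$ variables is, for some coloring in $\mathcal F$, simultaneously \emph{rainbow} (all colors distinct) and split evenly between a fixed ``left'' block of colors $A$ and ``right'' block $B$ (each side receiving at most $\lceil r/2\rceil$ of its variables). This is the source of the $(3e)^{r}$ factor. For each coloring and each candidate left-half $S\subseteq A$ with $|S|\le\lceil r/2\rceil$ --- there are at most $\binom{m}{\le r/2}\le(rs)^{r/2}$ of these --- I form the assignment that turns on exactly $S$ inside $A$, turns on all of $B$, and is arranged to zero out a variable of every already-known term on the $A$-side; I query $f$, and whenever it answers $1$ I call \textbf{Find-Term} to read off the matching right-half and hence a full term. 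The key point is that the right-half is recovered by a single \textbf{Find-Term} completion rather than by a second, independent enumeration, so the two halves combine \emph{additively} ($\binom{m}{r/2}+\binom{m}{r/2}$) instead of multiplicatively ($\binom{m}{r/2}^{2}=\binom{m}{r}$).

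For correctness I would prove the analogue of the lemmas for Algorithms I and II: if $f\not\equiv h$, pick an undiscovered term $M$, take the coloring in $\mathcal F$ that makes $M$ rainbow and balanced, and set $S=M\cap A$; then the corresponding assignment is zero on $h$ and one on $M$, so the query succeeds and \textbf{Find-Term} yields a new term. The hard part will be handling \emph{interference}: a known term whose left-part already lies inside $S$ is not killed by zeroing $A\setminus S$, and the all-ones setting of $B$ can let other terms fire and mislead \textbf{Find-Term}. The rainbow structure supplied by $\mathcal F$ is exactly what I would use to control which terms can interfere and to guarantee that, after the single right-side completion, the extracted minterm really is new; this is where the extra polynomial slack in the bound is spent. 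Finally I would tally the cost: at most $s$ successful peels, each preceded by a sweep of $|\mathcal F|\cdot\binom{m}{\le r/2}$ cheap queries and an $O(r\log m)$ completion, giving $s\cdot(3e)^{r}(rs)^{r/2}\cdot O(r\log(rs))=O((3e)^{r}(rs)^{r/2+1.5})$ queries, plus the $O(rs\log n)$ preprocessing --- matching the claimed bound and, against the $\Omega((2s/r)^{r/2})$ lower bound, asymptotically tight for fixed $r$.
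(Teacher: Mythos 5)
There is a genuine gap, and it sits exactly where you placed your disclaimer. Your meet-in-the-middle scheme enumerates the left half $S=M\cap A$ of the unknown term and sets \emph{all} of the right block $B$ to one, hoping that \textbf{Find-Term} will complete the right half. But \textbf{Find-Term} only returns a \emph{new} term when started from an assignment $a$ with $h(a)=0$ and $f(a)=1$; if some already-known term is satisfied at $a$, the minterm it extracts may be one you already have, and the iteration makes no progress. Ensuring $h(a)=0$ is the whole difficulty: a known term lying entirely inside $B$, or one whose $A$-part is contained in $S$, cannot be killed by zeroing variables of $A\setminus B$ or of $A\setminus S$ --- you must zero one of its variables inside $B$, without knowing which variables of $B$ belong to $M$. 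That is itself a cover-free-family problem on $B$ (keep up to $r/2$ unknown variables alive while hitting up to $s$ interfering terms), of essentially the original difficulty; the rainbow/splitter family constrains only the single unknown term $M$ and says nothing about how the known terms distribute over $A$ and $B$. Solved naively it costs another factor of roughly $\binom{s+r/2}{r/2}$, destroying the claimed bound. (A smaller issue: you cannot identify ``the at most $rs$ relevant variables'' with $O(rs\log n)$ preprocessing queries before knowing the terms; the paper instead builds its combinatorial families only over the variables of the already-discovered terms and sets everything else to one.)

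The paper's actual mechanism for taming interference is different and is the idea your proposal is missing. By the pigeonhole principle at most $r\sqrt{rs}$ variables of $h$ are $\sqrt{s/r}$-frequent; the algorithm enumerates all subsets $R$ of these frequent variables with $|R|\le r$ (guessing $R=S\cap M$), fixes $R\gets 1$ and the rest of the frequent variables to $0$, and thereby reduces to a \emph{read}-$\sqrt{s/r}$ instance in which a new term of size $r'=r-|R|$ can intersect at most $r'\sqrt{s/r}$ known terms. A $(r'\sqrt{s/r},r')$-cover-free family then zeroes a hitting variable of each intersecting term while keeping $M$ alive, and a final greedy patching step kills the remaining satisfied terms (which are variable-disjoint from $M$, so patching them is safe). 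The bound $(3e)^r(rs)^{r/2+1.5}$ arises from balancing the enumeration $\binom{r\sqrt{rs}}{i}$ against the CFF size $N\bigl(((r-i)\sqrt{s/r};\,r-i);rs\bigr)$, not from splitting the unknown term in half. Your left-half enumeration does reproduce a $(rs)^{r/2}$ factor, but without a frequent/infrequent decomposition (or some other device giving a read-$k$ guarantee) the right-half completion cannot be made to work within the stated query budget.
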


Before proving this theorem, we first prove learnability in simpler settings.
We prove the following,

\begin{lemma}\label{restricted}
Let $f(x_1,x_2,\ldots,x_n)=M_1\vee \cdots\vee M_s$ be the target $s$-term $r$-MDNF.
Suppose the learning algorithm
knows some of the terms, $h = M_1\vee M_2\vee \cdots \vee M_{s-\ell}$ and knows
that $M_{s-\ell+1}$ is of size $r'$.
Suppose that~$h$ is a read $k$ monotone DNF.
Then, there exists an algorithm that finds a new term
(not necessarily $M_{s-\ell+1}$) using
$$
O\left(N((r'k;r');sr)) + r\log n\right),
$$
membership queries.
\end{lemma}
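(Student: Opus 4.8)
The plan is to mimic the structure of the proofs for Algorithms I and II, where we find a new term by constructing an assignment that kills all known terms $M_1,\ldots,M_{s-\ell}$ (makes $h$ evaluate to $0$) while keeping some unknown term alive (makes $f$ evaluate to $1$). The new twist here is that we want to exploit two pieces of side information: that $h$ is read-$k$ (no variable appears in more than $k$ of the known terms) and that the target term $M_{s-\ell+1}$ has size exactly $r'$, which is potentially much smaller than $r$. The goal is a cover-free-family size governed by the pair $(r'k, r')$ rather than $(s-\ell, r)$, which is a huge savings when $r'k \ll s$.

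Let me think about why the parameter $(r'k; r')$ is the right one. Suppose $b$ is an assignment with $f(b)=1$, $h(b)=0$, witnessed by the term $M := M_{s-\ell+1}$ of size $r'$ being satisfied, i.e. all $r'$ variables of $M$ are set to $1$ in $b$. As in the Algorithm II proof, since $h(b)=0$, each known term $M_i$ has some variable $x_{j_i}$ that is $0$ in $b$; let $U = \{x_{j_i}\}$ be this ``hitting set'' of zeros. In the Algorithm II analysis one needed $|U| \le s-\ell$, but now I would instead observe that because $h$ is read-$k$, each variable of $M$ appears in at most $k$ terms, so the $r'$ variables of $M$ can collectively ``cover'' (appear in) at most $r'k$ of the known terms. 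The terms \emph{not} covered by $M$ must be killed by a zero outside $M$; I would choose one zero-variable per uncovered term to form a set $U$ of size at most $r'k$, chosen so that $U$ is disjoint from the variables of $M$. Then I want a CFF assignment that is $0$ on all of $U$ (size $\le r'k$) and $1$ on all $r'$ variables of $M$: this is exactly the $(r'k; r')$-cover-free condition. The universe has size $\le sr$, giving $N((r'k; r'); sr)$, and \textbf{Find-Term} adds $O(r\log n)$.

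The main obstacle I anticipate is making the disjointness argument airtight: a known term $M_i$ that \emph{does} share a variable with $M$ might still need to be killed, if that shared variable is set to $1$ (because it lies in $M$) yet $M_i$ also contains another variable forced to $0$. I would handle this by partitioning the known terms into those that contain at least one variable of $M$ and those that do not. Only the latter (at most $s-\ell$ terms, but the read-$k$ bound says the former number at most $r'k$, so the latter is the complement — wait, that's the wrong direction) — so more carefully: a term sharing a variable with $M$ is automatically killed only if that shared variable is the one set to $0$, which it is not. Thus I must kill \emph{every} known term with a zero; the read-$k$ hypothesis instead bounds how the variables of $M$ interact, and I expect the real role of read-$k$ is to bound the number of terms that the CFF must simultaneously avoid to $r'k$ by a cleverer counting. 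I would therefore re-examine whether the correct set to zero-out is ``one variable per term among those terms meeting $M$,'' and verify the CFF parameters against that count; reconciling this count with the claimed $N((r'k; r'); sr)$ bound is the step I would check most carefully before writing it up.

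Once the existence of a good CFF assignment $a$ is established, the remainder is routine: I would define $a'$ to equal $a$ on the relevant variables and $1$ elsewhere (as in Algorithm II, Step 3.1), verify $h(a')=0$ and $f(a')=1$, and invoke \textbf{Find-Term} from Subsection~\ref{LOMT} to extract a genuine minterm below $a'$ at cost $O(r\log n)$. Summing the CFF size $N((r'k; r'); sr)$ and the $O(r\log n)$ term yields the claimed bound.
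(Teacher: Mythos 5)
You have correctly identified the right CFF parameters and the right ``hitting set'': the read-$k$ hypothesis bounds only the number of known terms that \emph{intersect} the new term $M$ (at most $r'k$ of them), and for each such term one can pick a zero-variable outside $M$ (possible since $f$ is reduced, so no $M_i$ is contained in $M$). But you then run into the problem you yourself flag --- ``wait, that's the wrong direction'' --- and leave it unresolved: the terms of $h$ that are \emph{disjoint} from $M$ can number up to $s-\ell$, far more than $r'k$, and a $(r'k,r')$-CFF cannot be asked to kill all of them. Without a mechanism for handling those terms the argument does not close, so this is a genuine gap rather than a verification detail.

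The missing idea in the paper's proof is an \emph{adaptive post-processing step} that requires no additional queries. After selecting the CFF assignment $a$ and extending it to $a'$ (ones outside $V$), the algorithm --- which knows $h$ explicitly --- checks each known term $M_i$ and, for every $M_i$ with $M_i(a')=1$, zeroes out one arbitrary variable of $M_i$ to obtain $a''$. The point is that the CFF has already killed every term intersecting $M$, so the only terms still satisfied are disjoint from $M$'s variable set; zeroing a variable of such a term therefore cannot destroy $M(a'')=1$, and by monotonicity it cannot resurrect any already-killed term. This is why the CFF only needs to control the $r'k$ intersecting terms plus the $r'$ variables of $M$, yielding the claimed $N((r'k;r');sr)$ bound, with \textbf{Find-Term} contributing the $O(r\log n)$ term as you describe.
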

\begin{proof} Consider the algorithm in Figure~\ref{changingC23}.

  \begin{figure}[h!]
  \begin{center}
  \fbox{\fbox{\begin{minipage}{28em}
  \begin{tabbing}
  xxxx\=xxxx\=xxxx\=xxxx\= \kill
  {\bf LearnRead($MQ_f,s,\ell,r'$)}\\
  1) Let $V$ be the set of variables that appear in $h$. \\
  2) Let $A$ be a $(|V|,(r'k,r'))$-CFF
over the variables $V$.\\
  3) For each $a\in A$\\
  \> 3.1) Let $a'\in \{0,1\}^n$ where $a'$ is $a_i$ in each
 $x_i\in V$, \\
  \> \> and one in each $x_i\in X_n\backslash V$.\\
  \> 3.2) $X\gets \O$.\\
  \> 3.3) For each $M_i$, $i=1,\ldots,s-\ell$ such that $M_i(a')=1$ do\\
  \> \> \> Take any variable $x_j$ in $M_i$ and set $X\gets X\cup \{x_j\}$\\
  \> 3.4) Set $a''\gets a'|_{X\gets 0}$.\\
  \> 3.5) If $f(a'')=1$ and $h(a'')=0$ then find a new term using {\bf Find-Term}.
  \end{tabbing}
  \end{minipage}}}
  \end{center}
	\caption{Finding a new term in read $k$.}
	\label{changingC23}
	\end{figure}

Let $V$ be the set of variables that appear in $h$.
Let $M:=M_{s-\ell+1}$. Let $U$ be the set of variables in
$M$ and $W=U\cap V$. Each variable in $W$ can appear in at most
$k$ terms in $h$. Let w.l.o.g $h':=M_1\vee \cdots\vee M_{t}$ be those
terms. Notice that $t\le |W|k\le r'k$. In each term $M_i$, $i\le t$
one can choose a variable $x_{j_i}$ that is not in $W$. This is because,
if all the variable in $M_i$ are in $W$, then $M\Rightarrow M_i$ and
then $f$ is not reduced MDNF.

Let $Z=\{x_{j_i}|i=1,\ldots,t\}$.
Since $|Z|\le t\le r'k$ and $|U|\le r'$ there is $a\in A$ that
is $0$ in every variable in $Z$ and is $1$ in every variable in $U$.
Now notice that $a'$ in step 3.1 in the algorithm
is the same as $a$ over the variables in $Z$ and therefore
$h'(a')=0$. Also $a'$ is the same as $a$ over the variables in $U$ and therefore
$M(a')=1$. Now notice that since $M_i(a')=0$ for $i\le t$, in step 3.4 in the algorithm
we only flip $a'_i$ that correspond to variables in the terms $M_i$, $i>t$.
The set of variables in each other term $M_i$, $i>t$ is disjoint with $U$.
Therefore if for some $i>t$, $M_i(a')=1$ then setting any variable $x_j$ in $M_i$ that is one in $a'$ to zero
will not change the values $M(a')=1$ and (from monotonicity) $h'(a')=0$.
Eventually, we will have an assignment $a''$ that satisfies $h(a'')=0$ and $M(a'')=1$ which implies $f(a'')=1$.
\end{proof}

In the following lemma, we remove the restriction on $h$.
\begin{lemma}\label{no-rest}
Let $f(x_1,x_2,\ldots,x_n)=M_1\vee \cdots\vee M_s$ be the target $s$-term $r$-MDNF.
Suppose some of the terms, $h = M_1\vee M_2\vee \ldots \vee M_{s-\ell}$, are already known
to the learning algorithm.
Then, for any integer $d$, there exists an algorithm that finds a new term using
$$
O\left(\sum_{i=1}^r \binom{r\sqrt{ds}}{i} N(((r-i)\sqrt{s/d}; (r-i));rs) + r\log n\right),
$$
membership queries.
\end{lemma}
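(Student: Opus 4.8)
The plan is to reduce the general case to the read-$k$ setting of Lemma~\ref{restricted} by separating out the handful of variables of $h$ that occur in many terms. Fix the threshold $T=\sqrt{s/d}$ and call a variable \emph{frequent} if it occurs in at least $T$ of the terms $M_1,\dots,M_{s-\ell}$ of $h$; let $F$ be the set of frequent variables. Since $h$ has at most $s$ terms, each of size at most $r$, the number of (variable,\,term) incidences is at most $rs$, so $|F|\le rs/T=r\sqrt{ds}$. Because $h$ is already known to the algorithm, $F$ and all the frequencies are computed without any membership query.

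The sought new term $M$ is unknown, so I would guess which of its variables are frequent. For each $i=0,1,\dots,r$ and each $G\subseteq F$ with $|G|=i$, consider the restriction $\rho_G$ that sets every variable of $G$ to $1$ and every variable of $F\setminus G$ to $0$, and run the procedure \textbf{LearnRead} of Lemma~\ref{restricted} on the residual target $f|_{\rho_G}$ with known part $h|_{\rho_G}$, using the size bound $r'=r-i$ and read bound $k=\sqrt{s/d}$; the oracle $MQ_{f|_{\rho_G}}$ is simulated by applying $\rho_G$ inside $MQ_f$. As soon as some call reports an assignment on which $f$ is $1$ and $h$ is $0$, invoke \textbf{Find-Term} to extract a genuine new term of $f$.

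Two facts drive the correctness, and verifying them is the main work. First, $h|_{\rho_G}$ is read-$k$: a restriction never raises the frequency of any variable, and every term that survives $\rho_G$ contains only non-frequent variables (any term meeting $F\setminus G$ is set to $0$, and the variables of $G$ are deleted), so each surviving variable still occurs in fewer than $T=\sqrt{s/d}$ terms. Second, for the correct guess $G=U\cap F$, where $U$ is the variable set of $M$, the term $M$ restricts to a term $M'$ whose variables are exactly the non-frequent variables of $M$, so $|M'|\le r-i$; moreover $M'$ is not implied by $h|_{\rho_G}$, since otherwise $M\subseteq U$ would hold for some surviving term, contradicting the reducedness of $f$. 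Thus Lemma~\ref{restricted} applies to the residual instance and produces an assignment that is $1$ in $f|_{\rho_G}$ and $0$ in $h|_{\rho_G}$; lifting it through $\rho_G$ and running \textbf{Find-Term} yields a new term of $f$ (which may legitimately contain some of the guessed frequent variables of $G$).

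For the query count, each guess $G$ of size $i$ costs $O\!\left(N(((r-i)\sqrt{s/d};(r-i));rs)\right)$ queries by Lemma~\ref{restricted}, the CFF being taken over the at most $rs$ variables of $h|_{\rho_G}$; the additive $O(r\log n)$ of \textbf{Find-Term} is incurred only on the single successful call. Summing over all guesses gives
\[
O\!\left(\sum_{i} \binom{r\sqrt{ds}}{i}\, N\!\big(((r-i)\sqrt{s/d};\,(r-i));rs\big) + r\log n\right),
\]
which is the claimed bound (the $i=0$ term contributes the leading $N((r\sqrt{s/d};r);rs)$). The delicate point is the choice of threshold: $T=\sqrt{s/d}$ is exactly what simultaneously keeps the number of frequent variables small, $|F|\le r\sqrt{ds}$ so that there are few guesses $\binom{r\sqrt{ds}}{i}$, and keeps the residual read number at $\sqrt{s/d}$ so that each CFF is cheap; the free parameter $d$ is introduced precisely to tune this trade-off, and optimizing over $d$ will be what eventually yields Theorem~\ref{s-dominant}.
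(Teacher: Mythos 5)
Your proposal is correct and follows essentially the same route as the paper: single out the $\sqrt{s/d}$-frequent variables of $h$ (at most $r\sqrt{ds}$ of them by the incidence count), guess the intersection of the unknown term with this set, restrict those variables to $1$/$0$ accordingly so that the residual formula becomes read-$\sqrt{s/d}$ and the residual term has size $r-i$, invoke Lemma~\ref{restricted} on the simulated oracle, and lift the witness back. Your treatment is if anything slightly more careful than the paper's (you justify the read bound and the non-implication of the restricted term explicitly, and you include the $i=0$ guess, which the paper's displayed sum starting at $i=1$ technically omits).
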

\begin{proof} Consider the algorithm in Figure~\ref{changingC44}.

  \begin{figure}[h!]
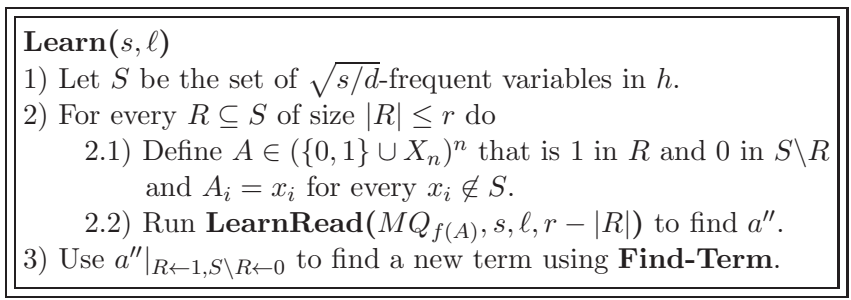

  \begin{center}
  \fbox{\fbox{\begin{minipage}{28em}
  \begin{tabbing}
  xxxx\=xxxx\=xxxx\=xxxx\= \kill
  {\bf Learn($s,\ell$)}\\
  1) Let $S$ be the set of $\sqrt{s/d}$-frequent variables in $h$. \\
  2) For every $R\subseteq S$ of size $|R|\le r$ do\\
   \> 2.1) Define $A\in (\{0,1\}\cup X_n)^n$ that is $1$ in $R$ and 0 in $S\backslash R$\\
           \>\> and $A_i=x_i$ for every $x_i\not\in S$.\\
   \> 2.2) Run {\bf LearnRead($MQ_{f(A)},s,\ell,r-|R|$)} to find $a''$.\\
  3) Use $a''|_{R\gets 1,S\backslash R\gets 0}$ to find a new term using {\bf Find-Term}.
  \end{tabbing}
  \end{minipage}}}
  \end{center}
	\caption{Finding a new term.}
	\label{changingC44}
	\end{figure}

First note that in step 2.2, $f(A)$ is considered in {\bf LearnRead}
as a function in all the variables $X_n$. Note also that the oracle $MQ_{f(A)}$ can
be simulated by $MQ_{f}$, since $f(A)(a)=f(a|_{R\gets 0,S\backslash R\gets 1})$.

Let $W$ be the set of variables that appear in $M:=M_{s-\ell+1}$
and $R=S \cap W$. Note that $A$ is zero in all $S\backslash R$ and $1$ in $R$
and therefore $f(A)$ is now a read $\sqrt{s/d}$ and $M(A)$ contains at most $|W\backslash R|\le r-|R|$
variables. Therefore, when we run {\bf LearnRead}($MQ_{f(A)},s,\ell,r-|R|$) we find an assignment $a''$
that is $1$ in $M(A)$ and zero in $f(A)$ and then $a''|_{R\gets 0,S\backslash R\gets 1}$ is one in $f$
and zero in~$h$.

We now find the number of queries.
By the Pigeon hole principle, there are at most $|S|\le r\sqrt{ds}$ that
are $\sqrt{s/d}$-frequent. The number of sets $R\subseteq S$ of size $i$
is  ${r\sqrt{ds}\choose i}$. For each set, we run {\bf LearnRead($MQ_{f(A)},s,\ell,r-|R|$)}
that by Lemma~\ref{restricted} asks $N(((r-i)\sqrt{s/d}; (r-i));rs)$ queries.
This implies the result.
\end{proof}

We now prove our main result. We choose $d=r$. Then by the construction
(\ref{Bsh}), we have
\begin{eqnarray*}
\binom{r\sqrt{ds}}{i} N(((r-i)\sqrt{s/d}; (r-i));rs)&\le & \left(\frac{er\sqrt{rs}}{i}\right)^i (2e)^{r-i} \left(\frac{(r-i)\sqrt{s}}{\sqrt{r}}\right)^{r-i+3}\\
&\le & e^r 2^{r-i}(\sqrt{rs})^{r+3} \left(\frac{r}{i}\right)^i \left(\frac{r-i}{r}\right)^{r-i+3}\\
&\le &  e^r 2^{r-i}{r\choose i} (\sqrt{rs})^{r+3}.
\end{eqnarray*}

and therefore
$$\sum_{i=1}^r \binom{r\sqrt{ds}}{i} N(((r-i)\sqrt{s/d}; (r-i));rs)\le (3e)^r (rs)^{r/2+1.5}.$$

\subsubsection{Randomized Algorithm}
In this section, we give a randomized algorithm for the case $s> r$.

The randomized algorithm is the same as the deterministic one, except that
each CFF is constructed randomly, as in (\ref{rand}) with probability of success $1-\delta/s$.
We choose $d=1$ and get
\begin{eqnarray*}
\binom{r\sqrt{ds}}{i} N(((r-i)\sqrt{s/d}; (r-i));rs) \ \ \ \ \ \ \ \ \ \ \ \ \ \ \ \ \ \ \ \ \ \ \ \ \ \ \\
\le  \left(\frac{er\sqrt{s}}{i}\right)^i \sqrt{r}(e(\sqrt{s}+1))^{r-i}\left(2s\log rs+\log\frac{s}{\delta}\right).\\
\le e^r2^{r-i}  \left(\frac{r}{i}\right)^i \sqrt{r}s^{r/2}(s\log s+\log(1/\delta))
\end{eqnarray*}

and therefore
$$\sum_{i=1}^r \binom{r\sqrt{ds}}{i} N(((r-i)\sqrt{s/d}; (r-i));rs)\le \sqrt{r}(3e)^r s^{r/2}(s\log s+\log(1/\delta)).$$

\section{Conclusion and Open Problems}
In this paper, we gave an almost optimal adaptive exact learning algorithms for the class of $s$-term
$r$-MDNF. When $r$ and $s$ are fixed, the bounds are asymptotically tight.
Some gaps occur between the lower bounds and upper bounds.
For $r\ge s$, the gap is $c^s$ for some constant $c$ and for
$r\le s$ the gap is $r^{r/2}$.
It is interesting to close these gaps. Finding a better
deterministic construction of CFF will give
better deterministic algorithms.

Another challenging problem is finding tight bounds for non-adaptive learning of this class.

\ignore{
\section{OTHER STUFF DELETED}

\subsubsection{Randomized Algorithm}

For this case, we provide a simple learning algorithm.

\begin{theorem}\label{rand}
Let $f:\{0,1\}^n\to\{0,1\}$ be a hidden $s$-term $r$-MDNF function. Then, there exists a simple randomized algorithm that finds $f$ with high probability using
$$
O\left( (r\sqrt{s})^r + rs\log n\right),
$$
membership queries only.
\end{theorem}

The sturcture of the algorithm is similar to the previous one. We start with presenting two lemmas that replace Lemma \ref{restricted} and Lemma \ref{no-rest}. We show the following,
\begin{lemma}\label{RestrictedRand}
Let $f(x_1,x_2,\ldots,x_n)$ be a hidden monotone $r$-DNF. Suppose some of the terms, $h = M_1\vee M_2\vee \ldots \vee M_s$, are already known. Suppose that~$h$ is a read $\sqrt{s}$ monotone DNF. Also suppose that there is a new hidden term $M_{s+1}$ in~$f$ of size $r'$. Then, there exists a randomized algorithm that finds a new term using
$$
O\left((r'\sqrt{s})^{r'} + r\log n\right),
$$
membership queries only.
\end{lemma}

\noindent{\bf Proof.} Denote by $O$ the set of terms of size one in $h$. Let $I_O = \{i_1,i_2,\ldots,i_t\}$ be the set of indices of variables in $O$. Denote by $f'$ and $h'$ the functions
$$
f|_{x_{i_1}=0,x_{i_2}=0,\ldots,x_{i_t}=0}\ \ \mathrm{and}\ \ h|_{x_{i_1}=0,x_{i_2}=0,\ldots,x_{i_t}=0}
$$
Also denote by $T$ the set of terms in $h$ of size larger than one. To prove this lemma, we show how to find an assignment $c$ for which $f'(c) = 1$ and $h'(c) = 0$. Take a randomly chosen vector $c\in\{0,1\}^{n-t}$ where for every $i\in[n-t]$ we have that $c_i$ equals 1 with probability $1/r'\sqrt{s}$ (independently). We have that $M_{s+1}$ is satisfied by $c$ with the following probability,
$$
\Pr[M_{s+1}(c) = 1] = \left(\frac{1}{r'\sqrt{s}}\right)^{r'}.
$$
Denote by $V_{M_{s+1}}$ the set of  variables in $M_{s+1}$. For every $M_i\in T$, let $M'_i$ be the term $M_i$ after fixing all variable in $V_{M_{s+1}}$ to one.
Denote by $\mathcal{M}_1$ and $\mathcal{M}_2$ the sets
$$
\mathcal{M}_1 = \{M_i | M_i\in T\wedge|M'_i|=1 \}\ \ \mathrm{and}\ \ \mathcal{M}_2 = \{M_i | M_i\in T \wedge|M'_i|>1\}.
$$
Denote by $\mathcal{A}$ the probability that every term in $\mathcal{M}_1$ is not satisfied by $c$ given that $M_{s+1}$ is satisfied by $c$,
\begin{eqnarray}
\Pr[\mathcal{A}] & = & \Pr[\forall M_i\in \mathcal{M}_1, M_i(c)=0 | M_{s+1} = 1 ]  =  \Pr[\forall M_i\in \mathcal{M}_1, M'_i(c) = 0 ] \nonumber \\
& \geq & \left(1 - \frac{1}{r'\sqrt{s}}\right)^{|T|} \geq \left(1 - \frac{1}{r'\sqrt{s}}\right)^{r'\sqrt{s}} \sim \frac{1}{e}.
\end{eqnarray}

Denote by $\mathcal{B}$ the probability that every term in $\mathcal{M}_2$ is not satisfied by $c$ given that every term in $\mathcal{M}_1$ is not satisfied by $c$ and $M_{s+1}$ is satisfied by $c$,
\begin{eqnarray}
\Pr[\mathcal{B}] & = & \Pr[\forall M_i\in \mathcal{M}_2, M_i(c)=0 |\forall M_i\in \mathcal{M}_1, M_i(c)=0\wedge M_{s+1} = 1 ] \nonumber\\
& \geq &  \Pr[\forall M_i\in \mathcal{M}_2, M'_i(c) = 0 ] \nonumber \\
& = & 1 - \Pr[\exists M_i\in \mathcal{M}_2, M'_i(c) = 1 ]  \geq 1 - \sum_{M_i\in \mathcal{M}_2} \left(\frac{1}{r'\sqrt{s}}\right)^{|M'_i|}\nonumber\\
& \geq & 1 - |\mathcal{M}_2| \left(\frac{1}{r'\sqrt{s}}\right)^2 \geq 1 - \frac{1}{r'^2}.
\end{eqnarray}

From all the above, the probability that for a random vector $c$ we have $f'(c)=1$ and $h'(c)=0$
\begin{eqnarray}
\Pr_c[f'(c) = 1 \wedge h'(c) = 0] & \geq & \Pr[M_{s+1} = 1\wedge h'(c) = 0]\nonumber \\
& = & \Pr[M_{s+1}= 1]\Pr[h'(c)=0 |M_{s+1} = 1] \nonumber \\
& = & \Pr[M_{s+1}= 1]\Pr[\mathcal{A}]\Pr[\mathcal{B}]\nonumber \\
& = & \Omega\left(\left(\frac{1}{r'\sqrt{s}}\right)^{r'}\right).
\end{eqnarray}

Therefore, by asking $O((r'\sqrt{s})^{r'})$ random queries, we get with high probability an assignment that satisfies $f$ and not $h$. \qed

Next we have the following lemma,
\begin{lemma}
Let $f$ be a hidden monotone $r$-DNF. Suppose some of the terms, $h = M_1\vee M_2\vee \ldots \vee M_s$, are already known. Then, there exists a randomized algorithm that finds a new term using
$$
O\left((r\sqrt{s})^r + r\log n\right),
$$
membership queries only.
\end{lemma}
\noindent{\bf Proof.} Denote by $S = \{x_{i_1},x_{i_2},\ldots,x_{i_p}\}$ the set of $\sqrt{s}$-frequent variables in~$h$. There are at most $r\sqrt{s}$ such variables. For every $b\in \{0,1\}^{|S|}$ with hamming weight bounded by $r$, denote by~$f_b$ the function
$$
f_b = f|_{x_{i_1}=b_1,x_{i_2}=b_2,\ldots,x_{i_p}=b_p}.
$$
Note that for all $b\in\{0,1\}^{|S|}$ the function $f_b$ is read $(\sqrt{s})$ monotone DNF. Let $M_{s+1}$ be a new term in $f$. The size of $M_{s+1}$ is bounded by $r$. Let $S'$ be $S\cap M_{s+1}$. Let $b'$ be the (0,1)-vector of size $n$ that is one in every entry of $S'$ and zero otherwise. The function $f_{b'}$ contains a new term of size at most $r - |S'|$. From the above, if
we apply lemma \ref{RestrictedRand} on every $f_b$ (for $b\in\{0,1\}^{|S|}$ with hamming weight less than $r$) and look for a new term of size $r-wt(b)$ we are able to find a new term. The query complexity of this process is
$$
\sum_{i=1}^r \binom{r\sqrt{s}}{i} (i\sqrt{s})^i) + r\log n \qed
$$

Using the above two lemmas Theorem \ref{rand} follows immediately.

\noindent {\bf Proof of Theorem \ref{rand}.} The algorithm is iterative. It uses the above lemma to find an new term in each iteration and advances to the next iteration. \qed

\noindent {\bf Proof.} Let $S$ be the set of relevant variables \red{(consider defining relevant variable in Section 2)} in $h$, and let $I$ be the set of their indices. Denote by $a\in\{0,1\}^n$, the assignment where $a_i = 1$ if $i\not\in I$ and $a_i =0$ otherwise. We divide into two cases.

\begin{itemize}
	\item First case, $f(a) = 1$. In this case, we have that $f(a) = 1$ and $h(a) = 0$. Therefore, by flipping entries that are 1 in $a$ to zero while keeping $f(a) = 1$, we are able to find a new term.
  \item Second case $f(a) = 0$. Denote by $\hat{f}$ the function $f|_{x_{i_1}=1,x_{i_2}=1,\ldots,x_{i_t}=1}$ where $\{i_1,i_2,\ldots,i_t\} = [n] \setminus I$. Similarly, let $\hat{h}$ be $h|_{x_{i_1}=1,x_{i_2}=1,\ldots,x_{i_t}=1}$. Let $C \subseteq \{0,1\}^{n-t}$ be a $(n-t, (r'k, r'))$-cover free set. Perform the following changes to $C$ (presented in Figure~\ref{changingC}).
  \newpage
  \begin{figure}[h!]
  \begin{center}
  \fbox{\fbox{\begin{minipage}{28em}
  \begin{tabbing}
  xxxx\=xxxx\=xxxx\=xxxx\= \kill
  For every $c^{(i)}\in C$\\
  \> For every $M_j\in\{M_1,M_2,\ldots,M_s\}$\\
  \> \> if $M_j(c^{(i)}) = 1$ then\\
  \> \> \> chose one entry $v$ for which $x_v$ is in $M_j$ and \ \ \ \ \ \ \\
  \> \> \> flip this entry in $c^{(i)}$, that is, $c^{(i)}_v \gets 0$.
  \end{tabbing}
  \end{minipage}}}
  \end{center}
	\caption{Changing the cover free set}
	\label{changingC}
	\end{figure}

  Denote the resulting output of the above procedure by $\hat{C}$. We argue that for one of the vectors $\hat{c}\in\hat{C}$ we have $\hat{f}(\hat{c}) = 1$ and $\hat{h}(\hat{c}) = 0$.

  {\bf Proof.} It is easy to see from the way constructed that for every $\hat{c}\in\hat{C}$ we have $\hat{h}(\hat{c}) = 0$. Now, let $M_{s+1} = x_{\ell_1}x_{\ell_2}\cdots x_{\ell_p}$ be an arbitrary term or size $r'$ in $\hat{f}$ but not in~$\hat{h}$ (such term exists since $f$ contains a new term and $f(a) = 0$ in case 2). Let $\mathcal{M}$ be the set of terms in $\hat{h}$ that contain one or more of the variables $x_{\ell_1}, x_{\ell_2}, \ldots ,x_{\ell_p}$. That is
  $$
  \mathcal{M} = \{ M_j | j\in [s]\ \mathrm{and}\ M_j \cap \{x_{\ell_1}, x_{\ell_2}, \ldots ,x_{\ell_p}\}\not= \emptyset \}
  $$
  Since $\hat{h}$ is read $k$, we have at most $r'k$ terms in $\mathcal{M}$. Additionally, each term in $\mathcal{M}$ must contain a variable not in $\{x_{\ell_1},x_{\ell_2},\ldots,x_{\ell_p}\}$ (otherwise it is contained in $M_{s+1}$). Therefore, there exist a set of variables $\{x_{u_1},x_{u_2},\ldots, x_{u_q}\}$ such that $$\{x_{u_1},x_{u_2},\ldots, x_{u_q}\}\cap\{x_{\ell_1}, x_{\ell_2}, \ldots ,x_{\ell_p}\}=\emptyset,$$ its size $q\leq r'k$ and for every $M\in\mathcal {M}$ we have $$M\cap \{x_{u_1},x_{u_2},\ldots, x_{u_q}\}\not=\emptyset.$$ Since $p\leq r'$ and $q\leq r'k$, there exists a $c\in C$ that is 1 in $\{x_{\ell_1},x_{\ell_2},\ldots,x_{\ell_p}\}$ and zero in $\{x_{u_1},x_{u_2},\ldots, x_{u_q}\}$. It is easy to see that after changing $c$ to be $\hat{c}$, the variables $\{x_{\ell_1},x_{\ell_2},\ldots,x_{\ell_p}\}$ are still 1. Therefore, $M_{s+1}(\hat{c}) = 1$ and $\hat{f}(\hat{c}) =1$. $\qed$

  In the above, we show how to find a vector $\hat{c}\in\{0,1\}^{n-t}$ so that $\hat{f}(\hat{c}) = 1$ and $\hat{h}(\hat{c}) = 0$. Therefore, we can find a vector $c'\in\{0,1\}^n$ so that $f(c')=1$ and $h(c')=0$. By flipping 1's to zeros in $c'$ while keeping $f(c')=1$ we are able to find a new term. $\qed$
\end{itemize}

\noindent{\bf Proof.} Denote by $S = \{x_{i_1},x_{i_2},\ldots,x_{i_p}\}$ the set of $r\sqrt{s}$-frequent variables in~$h$. There are at most $\sqrt{s}$ such variables. For every $b\in \{0,1\}^{|S|}$ with hamming weight bounded by $r$, denote by~$f_b$ the function
$$
f_b = f|_{x_{i_1}=b_1,x_{i_2}=b_2,\ldots,x_{i_p}=b_p}.
$$
Note that for all $b\in\{0,1\}^{|S|}$ the function $f_b$ is read $(r\sqrt{s} + 1)$ monotone DNF. Let $M_{s+1}$ be a new term in $f$. The size of $M_{s+1}$ is bounded by $r$. Let $S'$ be $S\cap M_{s+1}$. Let $b'$ be the (0,1)-vector of size $n$ that is one in every entry of $S'$ and zero otherwise. The function $f_{b'}$ contains a new term of size at most $r - |S'|$. From the above, if
we apply lemma \ref{restricted} on every $f_b$ (for $b\in\{0,1\}^{|S|}$ with hamming weight less than $r$) and look for a new term of size $r-wt(b)$ we are able to find a new term. The query complexity of this process is
$$
\sum_{i=1}^r \binom{\sqrt{s}}{i} N(((r-i)r\sqrt{s}; (r-i));n) \qed
$$

After proving the above lemmas, we are ready to prove our theorem.
\\
\\
\noindent {\bf Proof of Theorem \ref{s-dominant}} Our algorithm is iterative. It finds a new term using lemma \ref{no-rest} in each iteration, and advances to the next iteration. $\qed$.

%
  \begin{figure}[h!]
  \begin{center}
  \fbox{\fbox{\begin{minipage}{28em}
  \begin{tabbing}
  xxxx\=xxxx\=xxxx\=xxxx\= \kill
  {\bf LearnRead($MQ_f,s,\ell,r'$)}\\
  \\
  \\
  \\
  \\
  \> \\
  \> \> \> \\
  \>
  \end{tabbing}
  \end{minipage}}}
  \end{center}
	\caption{Finding a new term in read $k$.}
	\label{Here}
	\end{figure}

}

\end{document}